\documentclass{article} 
\usepackage{iclr2020_conference,times}


\usepackage{amsmath,amsfonts,bm}









\def\eqref#1{equation~\ref{#1}}









\def\1{\bm{1}}












\DeclareMathAlphabet{\mathsfit}{\encodingdefault}{\sfdefault}{m}{sl}
\SetMathAlphabet{\mathsfit}{bold}{\encodingdefault}{\sfdefault}{bx}{n}

















\usepackage[toc,page]{appendix}

\usepackage{hyperref}
\usepackage{url}

\usepackage{graphicx}
\usepackage{amsmath}
\usepackage{amssymb}
\usepackage{bbm}
\usepackage{amsthm}
\usepackage{wrapfig}
\usepackage{caption}
\usepackage{subcaption}


\usepackage{algorithm}
\usepackage{algorithmic}  

\newtheorem{theorem}{Theorem}


\newcommand{\E}[2]{\operatorname{\mathbb{E}}_{#1}\left[#2\right]}

\newcommand{\EE}[1]{\operatorname{\mathbb{E}}_{#1}}

\DeclareMathOperator*{\argmax}{arg\,max}



\newcommand{\indg}[1]{\mathbbm{1}_\goal\left[#1\right]}



\newcommand{\sspace}{\mathcal{S}}
\newcommand{\aspace}{\mathcal{A}}

\newcommand{\state}{\mathbf{s}}
\newcommand{\sz}{{\state_0}}

\newcommand{\st}{{\state_t}}

\newcommand{\sT}{{\state_T}}
\newcommand{\stp}{{\state_{t+1}}}

\newcommand{\action}{\mathbf{a}}
\newcommand{\az}{{\action_0}}

\newcommand{\at}{{\action_t}}

\newcommand{\goal}{\mathbf{g}}








\newcommand{\pparams}{{\phi}}   

\newcommand{\dparams}{{\psi}}







\newcommand{\reals}{\mathbb{R}}

%
%
%



\renewcommand{\cite}{\citep}

\title{Dynamical Distance Learning for\\Semi-Supervised and Unsupervised\\Skill Discovery}


\author{Kristian Hartikainen\thanks{Correspondence to \href{mailto:kristian.hartikainen@cs.ox.ac.uk}{kristian.hartikainen@cs.ox.ac.uk}} \\
University of California, Berkeley \\
University of Oxford \\
\Myand 
Xinyang Geng \\
University of California, Berkeley \\
\AND
Tuomas Haarnoja\thanks{Equal advising.} \\
University of California, Berkeley \\
Google DeepMind \\
\And
Sergey Levine\footnotemark[2] \\
University of California, Berkeley \\
}

%

\iclrfinalcopy 
\begin{document}

\ifpreprint
    \newcommand{\vspacebeforesection}{\vspace{0mm}}
    \newcommand{\vspaceaftersection}{\vspace{0mm}}
    \newcommand{\vspacebeforesubsection}{\vspace{0mm}}
    \newcommand{\vspaceaftersubsection}{\vspace{0mm}}
\else
    \ificlrfinal
        \newcommand{\vspacebeforesection}{\vspace{0mm}}
        \newcommand{\vspaceaftersection}{\vspace{0mm}}
        \newcommand{\vspacebeforesubsection}{\vspace{0mm}}
        \newcommand{\vspaceaftersubsection}{\vspace{0mm}}
    \else
        \ifpreprint
            \newcommand{\vspacebeforesection}{\vspace{-4mm}}
            \newcommand{\vspaceaftersection}{\vspace{-3mm}}
            \newcommand{\vspacebeforesubsection}{\vspace{-3mm}}
            \newcommand{\vspaceaftersubsection}{\vspace{-1mm}}
        \fi
    \fi
\fi

\newcommand{\captiontextsize}{}

\maketitle



\begin{abstract}
Reinforcement learning requires manual specification of a reward function to learn a task. While in principle this reward function only needs to specify the task goal, in practice reinforcement learning can be very time-consuming or even infeasible unless the reward function is shaped so as to provide a smooth gradient towards a successful outcome. This shaping is difficult to specify by hand, particularly when the task is learned from raw observations, such as images. In this paper, we study how we can automatically learn dynamical distances: a measure of the expected number of time steps to reach a given goal state from any other state. These dynamical distances can be used to provide well-shaped reward functions for reaching new goals, making it possible to learn complex tasks efficiently. We show that dynamical distances can be used in a semi-supervised regime, where unsupervised interaction with the environment is used to learn the dynamical distances, while a small amount of preference supervision is used to determine the task goal, without any manually engineered reward function or goal examples. We evaluate our method both on a real-world robot and in simulation. We show that our method can learn to turn a valve with a real-world 9-DoF hand, using raw image observations and just ten preference labels, without any other supervision. Videos of the learned skills can be found on the project website:
\ifpreprint
        {\small\href{ https://sites.google.com/view/dynamical-distance-learning}{https://sites.google.com/view/dynamical-distance-learning}}.
\else
    \ificlrfinal
        {\small\href{ https://sites.google.com/view/dynamical-distance-learning}{https://sites.google.com/view/dynamical-distance-learning}}.
    \else
        {\small\href{ https://sites.google.com/view/skills-via-distance-learning}{https://sites.google.com/view/skills-via-distance-learning}}.
    \fi
\fi
\end{abstract}
\section{Introduction}
\label{sec:introduction}

The manual design of reward functions represents a major barrier to the adoption of reinforcement learning (RL), particularly in robotics, where vision-based policies can be learned end-to-end~\cite{levine2016end,haarnoja2018bsoft}, but still require reward functions that themselves might need visual detectors to be designed by hand~\cite{singh2019end}. While in principle the reward only needs to specify the goal of the task, in practice RL can be exceptionally time-consuming or even infeasible unless the reward function is shaped so as to provide a smooth gradient towards a successful outcome. Prior work tackles such situations with dedicated exploration methods~\cite{houthooft2016vime,osband2016deep,andrychowicz2017hindsight}, or by using large amounts of random exploration~\cite{mnih2015human}, which is feasible in simulation but infeasible for real-world robotic learning. It is also common to employ heuristic shaping, such as the Cartesian distance to a goal for an object relocation task~\cite{mahmood2018setting,haarnoja2018composable}. However, this kind of shaping is brittle and requires manual insight, and is often impossible when ground truth state observations are unavailable, such as when learning from image observations.

In this paper, we aim to address these challenges by introducing dynamical distance learning (DDL), a general method for learning distance functions that can provide effective shaping for goal-reaching tasks without manual engineering. Instead of imposing heuristic metrics that have no relationship to the system dynamics, we quantify the distance between two states in terms of the number of time steps needed to transition between them. This is a natural choice for dynamical systems, and prior works have explored learning such distances in simple and low-dimensional domains~\citep{kaelbling1993learning}. While such distances can be learned using standard model-free reinforcement learning algorithms, such as Q-learning, we show that such methods generally struggle to acquire meaningful distances for more complex systems, particularly with high-dimensional observations such as images. We present a simple method that employs supervised regression to fit dynamical distances, and then uses these distances to provide reward shaping, guide exploration, and discover distinct skills.

\begin{figure}[tb]
    \centering
    \includegraphics[width=\columnwidth, trim={0 0 0 0}, clip]{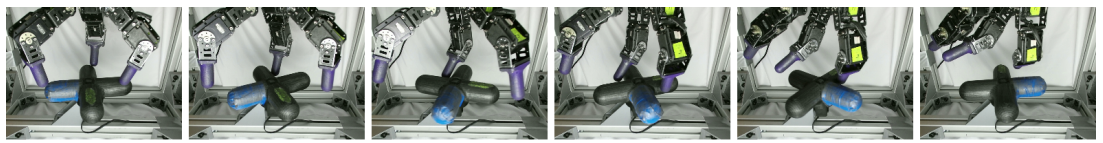}


    \caption{\captiontextsize We present a dynamical distance learning (DDL) method that can learn a 9-DoF real-world dexterous manipulation task directly from raw image observations. DDL does not assume access to the true reward function and solves the 180 degree valve-rotation task in 8 hours by relying only on 10 human-provided preference labels.}
    \label{fig:claw_intro}

\end{figure}

The most direct use of DDL is to provide reward shaping for a standard deep RL algorithm, to optimize a policy to reach a given goal state. We can also formulate a semi-supervised skill learning method, where a user expresses preferences over goals, and the agent autonomously collects experience to learn dynamical distances in a self-supervised way. Finally, we can use DDL in a fully unsupervised method, where the most distant states are selected for exploration, resulting in an unsupervised reinforcement learning procedure that discovers difficult skills that reach dynamically distant states from a given start state. All of these applications avoid the need for manually designed reward functions, demonstrations, or user-provided examples, and involve minimal modification to existing deep RL algorithms.

DDL is a simple and scalable approach to learning dynamical distances that can readily accommodate raw image inputs and, as shown in our experiments, substantially outperforms prior methods that learn goal-conditioned policies or distances using approximate dynamic programming techniques, such as Q-learning. We show that using dynamical distances as a reward function in standard reinforcement learning methods results in policies that take the shortest path to a given goal, despite the additional shaping. Empirically, we compare the semi-supervised variant of our method to prior techniques for learning from preferences. We also compare our method to prior methods for unsupervised skill discovery on tasks ranging from 2D navigation to quadrupedal locomotion. Our experimental evaluation demonstrates that DDL can learn complex locomotion skills without any supervision at all, and that the preferences-based version of DDL can learn to turn a valve with a real-world 9-DoF hand, using raw image observations and 10 human-provided preference labels, without any other supervision.

\vspacebeforesection
\section{Related Work}
\label{sec:related_work}
\vspaceaftersection

Dynamical distance learning is most closely related to methods that learn goal-conditioned policies or value functions~\cite{schaul2015universal,sutton2011horde}. Many of these works learn goal-reaching directly via model-free RL, often by using temporal difference updates to learn the distance function as a value function~\cite{kaelbling1996reinforcement,schaul2015universal,andrychowicz2017hindsight,pong2018temporal,nair2018visual,florensa2019self}. 
For example, \citet{kaelbling1993learning} learns a goal conditioned Q-function to represent the shortest path between any two states, and \citet{andrychowicz2017hindsight} learns a value function that resembles a distance to goals, under a user-specified low-dimensional goal representation. Unlike these methods, DDL learns policy-conditioned distances with an explicit supervised learning procedure, and then employs these distances to recover a reward function for RL. We experimentally compare to RL-based distance learning methods, and show that DDL attains substantially better results, especially with complex observations. Another line of prior work uses a learned distance to build a search graph over a set of visited states~\cite{savinov2018semi,eysenbach2019search}, which can then be used to plan to reach new states via the shortest path. Our method also learns a distance function separately from the policy, but instead of using it to build a graph, we use it to obtain a reward function for a separate model-free RL algorithm.

The semi-supervised variant of DDL is guided by a small number of preference queries. Prior work has explored several ways to elicit goals from users, such as using outcome examples and a small number of label queries~\cite{singh2019end}, or using a large number of relatively cheap preferences~\cite{christiano2017deep}. The preference queries that our semi-supervised method uses are easy to obtain and, in contrast to prior work~\cite{christiano2017deep}, we only need a small number of these queries to learn a policy that reliably achieves the user's desired goal. Our method is also well suited for fully unsupervised learning, in which case DDL uses the distance function to propose goals for unsupervised skill discovery. Prior work on unsupervised reinforcement learning has proposed choosing goals based on a variety of unsupervised criteria, typically with the aim of attaining broad state coverage~\cite{nair2018visual,florensa2018automatic,eysenbach2018diversity,warde2018unsupervised,pong2019skewfit}. Our method instead repeatedly chooses the most distant state as the goal, which produces rapid exploration and quickly discovers relatively complex skills. We provide a comparative evaluation in our experiments.

\vspacebeforesection
\section{Preliminaries}
\vspaceaftersection

In this work, we study control of systems defined by fully observed Markovian dynamics $p(\state'|\state,\action):\sspace\times\sspace\times\aspace\rightarrow \reals_{\geq0}$, where $\sspace$ and $\aspace$ are continuous state and action spaces. We aim to learn a stochastic policy $\pi(\action|\state): \aspace\times\sspace \rightarrow \reals_{\geq0}$, to reach a goal state $\goal\in\sspace$.
We will denote a trajectory with $\tau\triangleq(\sz,\az,...,\sT)\sim\rho_\pi$, where $\rho_\pi$ is a the trajectory distribution induced by the policy $\pi$, and $\state_0$ is sampled from an initial state distribution $\rho(\state_0)$. 
The policy can be optimized using any reinforcement learning algorithm by maximizing
\begin{align}
\mathcal{L}(\pi) = \E{\tau\sim\rho_\pi}{\sum_{t=0}^\infty \gamma^tr_\goal(\st,\at)},
\label{eq:rl_objective}
\end{align}
where $r_\goal: \sspace\times\aspace \rightarrow [-R_\mathrm{min}, R_\mathrm{max}]$ is a bounded reward function and $\gamma\in[0,1)$ is a discount factor.\footnote{In practice, we use soft actor-critic to learn the policy, which uses a related maximum entropy objective~\cite{haarnoja2018bsoft}.}
However, we \emph{do not} assume that we have access to a shaped reward function. In principle, we could set the reward to $r_\goal(\state,\action) = 0$ if $\state=\goal$ and $r_\goal(\state,\action) =-1$ otherwise to learn a policy to reach the goal in as few time steps as possible. Unfortunately, such a sparse reward signal is extremely hard to optimize, as it does not provide any gradient towards the optimal solution until the goal is actually reached. Instead, in \autoref{sec:distance_learning}, we will show that we can efficiently learn to reach goals by making use of a learned dynamical distance function.

\vspacebeforesection
\section{Dynamical Distance Learning}
\label{sec:distance_learning}
\vspaceaftersection

The aim of our method is to learn policies that reach goal states. These goal states can be selected either in an unsupervised fashion, to discover complex skills, or selected manually by the user. The learning process alternates between two steps: in the \emph{distance evaluation} step, we learn a policy-specific dynamical distance, which is defined in the following subsection. In the \emph{policy improvement} step, the policy is optimized to reach the desired goal by using the distance function as the negative reward. This process will lead to a sequence of policies and dynamical distance functions that
converge to an effective goal-reaching policy. Under certain assumptions, we can prove that this process converges to a policy that minimizes the distance from any state to any goal, as discussed in Appendix \ref{appendix:policy-improvement-proof}. In this section, we define dynamical distances and describe our dynamical distance learning (DDL) procedure. In Section~\ref{sec:target_proposals}, we will describe the different ways that the goals can be chosen to instantiate our method as a semi-supervised or unsupervised skill learning procedure.

\vspacebeforesubsection
\subsection{Dynamical Distance Functions}
\label{sec:distance_definition}
\vspaceaftersubsection

The dynamical distance associated with a policy $\pi$, which we write as $d^\pi(\state_i,\state_j)$, is defined as the expected number of time steps it took for $\pi$ to reach a state $\state_j$ from a state $\state_i$, given that the two were visited in the same episode.\footnote{Dynamical distances are not true distance metrics, since they do not in general satisfy triangle inequalities.} Mathematically, the distance is defined as:
\begin{align}
    d^\pi(\state, \state') \triangleq \EE{\tau\sim\pi | \state_i=\state, \state_j=\state',\ j\geq i}{\left[\sum_{t=i}^{j-1}\gamma^{t-i}c(\state_t,\state_{t+1})\right]},
\label{eq:distance_definition}
\end{align}
where $\tau$ is sampled from the conditional distribution of trajectories that passes through first $\state$ and then $\state'$, and where $c$ is some local cost of moving from $\state_i$ to $\state_{i+1}$. For example, in a typical case in the absence of supervision, we can set $c(\st,\stp)\equiv 1$ analogously to the binary reward function in \autoref{eq:rl_objective}, in which case the sum reduces to $j - i$, and we recover the expected number of time steps to reach $\state'$.
In principle, we could also trivially incorporate more complex local costs $c$, for example to include action costs. This modification would be straightforward, though we focus on the simple $c(\st,\stp)\equiv 1$ in our derivation and experiments. We include the discount factor to extend the definition to infinitely long trajectories, but in practice we set $\gamma=1$.

\vspacebeforesubsection
\subsection{Distance Evaluation}
\label{sec:distance_evaluation}
\vspaceaftersubsection

In the distance evaluation step, we learn a distance function $d^\pi_\dparams(\state,\state')$, parameterized by $\dparams$, to estimate the dynamical distance between pairs of states visited by a given policy $\pi_\pparams$, parameterized by $\pparams$. We first roll out the policy multiple times to sample trajectories $\tau_k$ of length $T$. The empirical distance between states $\state_i,\state_j\in\tau_k$, where $0\leq i\leq j \leq T$, is given by $j-i$. Because the trajectories have a finite length, we are effectively ignoring the cases where reaching $\state_j$ from $\state_i$ would take more than $T-i$ steps, biasing this estimate toward zero, but since the bias becomes smaller for shorter distances, we did not find this to be a major limitation in practice. We can now learn the distance function via supervised regression by minimizing 
\begin{align}
    \mathcal{L}_d(\dparams) = \frac{1}{2}\E{\substack{\tau\sim\rho_\pi \\ i\sim[0,T] \\ j\sim[i,T]}}{\left(d^\pi_\dparams(\state_i,\state_j) - (j-i))\right)^2}.
    \label{eq:distance_loss}
\end{align}
As we will show in our experimental evaluation, this supervised regression approach makes it feasible to learn dynamical distances for complex tasks with raw image observations, something that has proven exceptionally challenging for methods that learn distances via goal-conditioned policies or value functions and rely on temporal difference-style methods. In direct comparisons, we find that such methods generally struggle to learn on the more complex tasks with image observations. On the other hand, a disadvantage of supervised regression is that it requires on-policy experience, potentially leading to poor sample efficiency. However, because we use the distance as an intermediate representation that guides off-policy policy learning, as we will discuss in \autoref{sec:policy_improvement}, we did not find the on-policy updates for the distance to slow down learning. Indeed, our experiments in \autoref{sec:claw_experiments} show that we can learn a manipulation task on a real robot with roughly the same amount of experience as is necessary when using a well-shaped and hand-tuned reward function.

\vspacebeforesubsection
\subsection{Policy Improvement}
\label{sec:policy_improvement}
\vspaceaftersubsection

In the policy improvement step, we use $d^\pi_\psi$ to optimize a policy $\pi_\pparams$, parameterized by $\pparams$, to reach a goal $\goal$. In principle, we could optimize the policy by choosing actions that greedily minimize the distance to the goal, which essentially treats negative distances as the values of a value function, and would be equivalent to the policy improvement step in standard policy iteration. However, acting greedily with respect to the dynamical distance defined in \autoref{eq:distance_definition} would result in a policy that is optimistic with respect to the dynamics.

\begin{wrapfigure}{r}{0.19\textwidth}
  \vspace{-8mm}
  \begin{center}
    \includegraphics[width=0.15\textwidth, trim={10mm 10mm 10mm 10mm}]{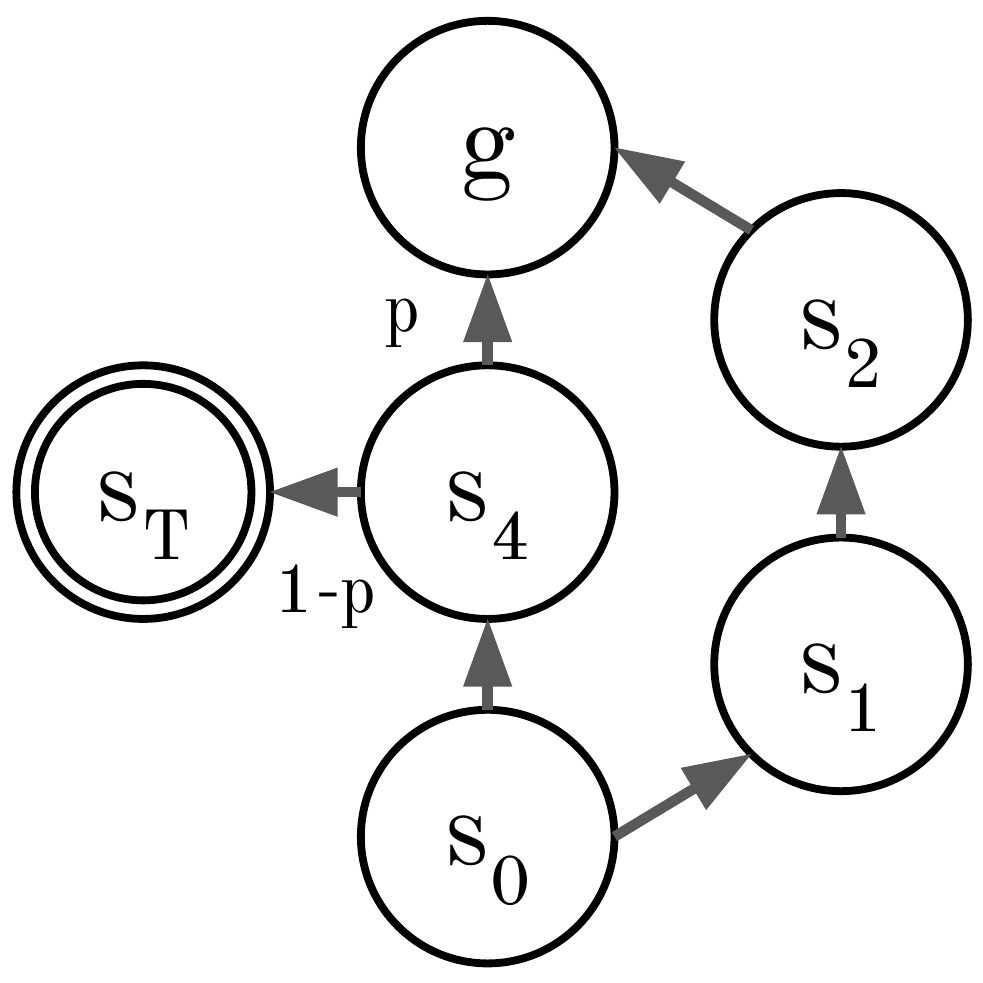}
  \end{center}
  \vspace{-6mm}
\end{wrapfigure}

This is because the dynamical distance is defined as the expected number of time steps \emph{conditioned} on the policy successfully reaching the second state from the first state, and therefore does not account for the case where the second state is not reached successfully. In some cases, this results in pathologically bad value functions. For example, consider the MDP shown on the right, where the agent can reach the goal $\goal$ using one of two paths. The first path has one intermediate state that leads to the target state with probability $p$, and an absorbing terminal state $\mathbf{s_{T}}$ with probability $1-p$. The other path has two intermediate states, but allows the agent to reach the target every time. The optimal dynamical distance will be 2, regardless of the value of $p$, causing the policy to always choose the risky path and potentially miss the target completely.

The definition of dynamical distances in \autoref{eq:distance_definition} follows directly from how we learn the distance function, by choosing both $\state_i$ and $\state_j$ from the same trajectory. Conditioning on both $\state_i$ and $\state_j$ is needed when the state space is continuous or large, since visiting two states by chance has zero or near-zero probability. We instead propose to use the distance as a negative reward, and apply reinforcement learning to minimize the cumulative distance on the path to the goal:
\begin{align}
\mathcal{L}_\pi(\pparams) = \E{\tau\sim\rho_{\pi}}{\sum_{t=0}^\infty \gamma^t d^\pi_\dparams(\st,\goal)}.
\label{eq:policy_loss}
\end{align}
This amounts to minimizing the cumulative distance over visited states, and thus taking a risky action becomes unfavourable if it takes the agent to a state that is far from the target at a later time. We further show that, under certain assumption, the policy that optimizes \autoref{eq:policy_loss} will indeed acquire the correct behavior, as discussed in Appendix~\ref{appendix:pathological-mdp}, and will converge to a policy that takes the shortest path to the goal, as we show in Appendix~\ref{appendix:policy-improvement-proof}.

We note that our simulated experiments below are run in deterministic environments and we do not fully understand why cumulative distances work better than greedily minimizing the distances even in those cases. A comparison between these two cases is shown in~\autoref{sec:ablations}.



\vspace{4mm}
\subsection{Algorithm Summary}
\label{sec:algorithm_summary}
\vspaceaftersubsection

\begin{wrapfigure}[12]{R}{0.47\textwidth}
\vspace{-7mm}
\begin{minipage}{\linewidth}

\begin{algorithm}[H] 
 \algsetup{linenosize=\small}
 \scriptsize
\newcommand{\algcomment}[1]{\hfill$\triangleright$\,#1}
  \caption{Dynamical Distance Learning}
  \label{alg:algorithm}
\begin{algorithmic}[1]

 \STATE {\bfseries Input:} $\pparams,\dparams$ \algcomment{Initial policy and distance parameters}
 \STATE {\bfseries Input:} $\mathcal{D}$ \algcomment{Empty replay pool}
  \REPEAT
    \STATE $\tau\sim\rho_\pi,\ \mathcal{D} \leftarrow \mathcal{D} \cup \tau$ \algcomment{Sample a new trajectory}
    \FOR{$i=0$ {\bfseries to} $N_d$}
      \STATE $\dparams \leftarrow \dparams - \lambda_d \hat\nabla \mathcal{L}_d(\dparams; \pi)$ \algcomment{Minimize distance loss}
    \ENDFOR
  \STATE $\goal\leftarrow \mathrm{choose\_goal}(\mathcal{D})$\algcomment{Choose goal state}
  \FOR{$i=0$ {\bfseries to} $N_\pi$}
      \STATE $\pparams\leftarrow \pparams - \lambda_\pi\hat\nabla \mathcal{L}_\pi(\pparams;d, \goal)$\algcomment{Minimize policy loss}
  \ENDFOR
  \UNTIL{converged}
\end{algorithmic}
\end{algorithm}

\end{minipage}

\end{wrapfigure}

The dynamical distance learning (DDL) algorithm is described in \autoref{alg:algorithm}. Our implementation uses soft actor-critic (SAC)~\cite{haarnoja2018bsoft} as the policy optimizer, but one could also use any other off-the-shelf algorithm. In each iteration, DDL first samples a trajectory using the current policy, and saves it in a replay pool $\mathcal{D}$. In the second step, DDL updates the distance function by minimizing the loss in \autoref{eq:distance_loss}. The distance function is optimized for a fixed number of $N_d$ stochastic gradient steps. Note that this method requires that we use recent experience from $\mathcal{D}$, so as to learn the distance corresponding to the current policy. In the third step,
DDL chooses a goal state from the recent experience buffer. We will describe two methods to choose these goal states in \autoref{sec:target_proposals}. In the fourth step, DDL updates the policy by taking $N_\pi$ gradient steps to minimize the loss in \autoref{eq:policy_loss}. The implementation of this step depends on the RL algorithm of choice. These steps are then repeated until convergence.

\vspacebeforesection
\section{Goal Proposals}
\label{sec:target_proposals}
\vspaceaftersection

In the previous section, we discussed how we can utilize a learned distance function to efficiently optimize a goal-reaching policy. However, a learned distance function is only meaningful if evaluated at states from the distribution it has been trained on, suggesting that the goal states should be chosen from the replay pool. Choosing a goal that the policy can already reach might at first appear strange, but it turns out to yield efficient directed exploration, as explained next.

Simple random exploration, such as $\epsilon$-greedy exploration or other strategies that add noise to the actions, can effectively cover states that are close to the starting state, in terms of dynamical distance. However, when high-reward states or goal states are far away from the start state, such na\"{i}ve strategies are unlikely to reach them. From this observation, we can devise a simple and effective exploration strategy that leverages the learned dynamical distances: we first use the policy to reach a known goal as quickly as possible and then explore the vicinity of that goal. This way more time is left to randomly explore states far from the initial state and this way likely discovering useful states. We propose two different strategies for choosing the goals below.

\vspacebeforesubsection
\subsection{Semi-Supervised Learning from Preferences} 
\vspaceaftersubsection


DDL can be used to learn to reach specific goals elicited from a user. The simplest way to do this is for a user to provide the goal state directly, either by specifying the full state, or selecting the state manually from the replay pool. However, we can also provide a more convenient way to elicit the desired state with preference queries. In this setting, the user is repeatedly presented with a small slate of candidate states from the replay pool, and asked to select the one that they prefer most. In practice, we present the user with a visualization of the final state in several of the most recent episodes, and the user selects the one that they consider closest to their desired goal.

For example, if the user wishes to train a legged robot to walk forward, they might pick the state where the robot has progressed the largest distance in the desired direction. The required user effort in selecting these states is minimal, and most of the agent's experience is still unsupervised, simply using the latest user-chosen state as the goal. In our experiments, we show that this semi-supervised learning procedure, which we call dynamical distance learning from preferences (DDLfP) can learn to rotate a valve with real-world hand from just ten queries, and can learn simulated locomotion tasks using 100 simulated queries.

\vspacebeforesubsection
\subsection{Unsupervised Exploration and Skill Acquisition}
\label{sec:unsupervised_exploration_and_skill_acquisition}
\vspaceaftersubsection

We can also use DDL to efficiently acquire complex behaviors, such as locomotion skills, in a completely unsupervised fashion. From the observation that many high-reward states are far away from the start state, we can devise a simple and effective exploration strategy that leverages our learned dynamical distances: we can simply select goals that are far from the initial state according to their estimated dynamical distance. We call this variant of our method ``dynamical distance learning - unsupervised'' (DDLUS).

Intuitively, this method causes the agent to explore the ``frontier'' of hard-to-reach states, either discovering shorter paths for reaching them and thus making them no longer be on the frontier, or else finding new states further on the fringe through additive random exploration. In practice, we find that this allows the agent to quickly explore distant states in a directed fashion. In \autoref{sec:experiments}, we show that, by setting $\mathrm{choose\_goal}(\mathcal{D}) \equiv \argmax_{\goal \in \mathcal{D}}{d^\pi_\dparams(\sz,\goal)}$, where $\sz$ is the initial state, we can acquire effective running gaits and pole balancing skills in a variety of simulated settings.
While this approach is not guaranteed to discover interesting and useful skills in general, we find that, on a variety of commonly used benchmark tasks, this approach to unsupervised goal selection actually discovers behaviors that perform better with respect to the (unknown) task reward than previously proposed unsupervised reinforcement learning objectives.

\vspacebeforesection
\section{Experiments}
\label{sec:experiments}
\vspaceaftersection

\begin{figure}[tb]
    \centering
    \begin{subfigure}[t]{0.16\columnwidth}
        \centering
        \includegraphics[width=\textwidth, trim={0 0 0 0}, clip]{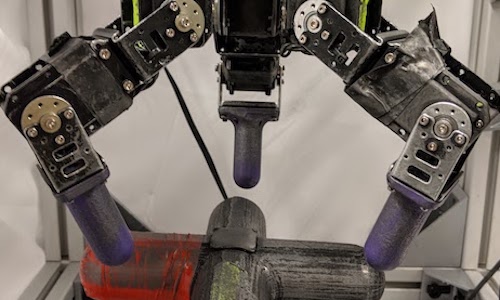}
        \caption{\captiontextsize DCLaw\\(hardware)}
        \label{fig:tasks:HardwareDClaw3-Screw-v2}
    \end{subfigure}
    \begin{subfigure}[t]{0.16\columnwidth}
        \centering
        \includegraphics[width=\textwidth, trim={0 0 0 0}, clip]{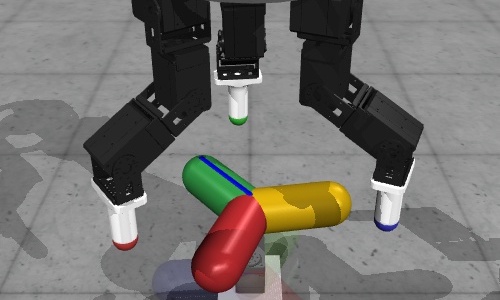}
        \caption{\captiontextsize DClaw\\(simulation)}
        \label{fig:tasks:DClaw-TurnFixed-v0}
    \end{subfigure}
    \begin{subfigure}[t]{0.16\columnwidth}
        \centering
        \includegraphics[width=\textwidth, trim={0 0 0 0}, clip]{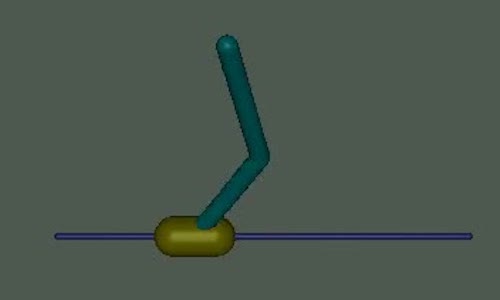}
        \caption{\captiontextsize Inverted\\DoublePendulum}
        \label{fig:tasks:InvertedDoublePendulum-v2}
    \end{subfigure}
    \begin{subfigure}[t]{0.16\columnwidth}
        \centering
        \includegraphics[width=\textwidth, trim={0 0 0 0}, clip]{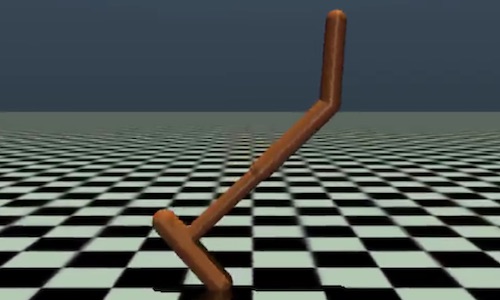}
        \caption{\captiontextsize Hopper}
        \label{fig:tasks:Hopper-v3}
    \end{subfigure}
    \begin{subfigure}[t]{0.16\columnwidth}
        \centering
        \includegraphics[width=\textwidth, trim={0 0 0 0}, clip]{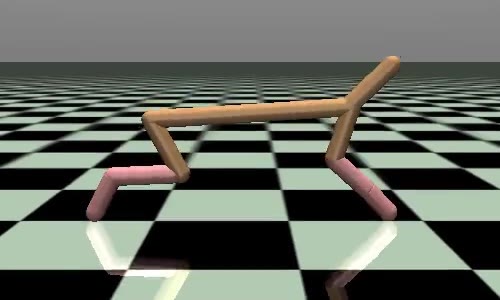}
        \caption{\captiontextsize HalfCheetah}
        \label{fig:tasks:HalfCheetah-v3}
    \end{subfigure}
    \begin{subfigure}[t]{0.16\columnwidth}
        \centering
        \includegraphics[width=\textwidth, trim={0 0 0 0}, clip]{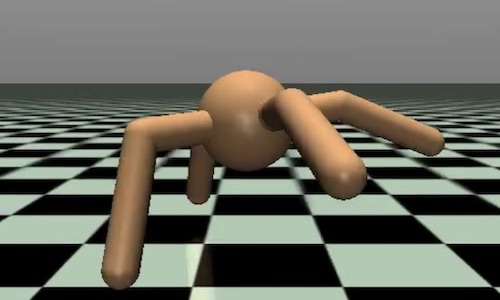}
        \caption{\captiontextsize Ant}
        \label{fig:tasks:Ant-v3}
    \end{subfigure}
    \caption{\captiontextsize We evaluate our method both in simulation and on a real-world robot. We show that our method can learn to turn a valve with a real-world 9-DoF hand~(\subref{fig:tasks:HardwareDClaw3-Screw-v2}), and run ablations in the simulated version of the same task~(\subref{fig:tasks:DClaw-TurnFixed-v0}). We also demonstrate that our method can learn pole balancing~(\subref{fig:tasks:InvertedDoublePendulum-v2}) and locomotion~(\subref{fig:tasks:Hopper-v3},~\subref{fig:tasks:HalfCheetah-v3},~\subref{fig:tasks:Ant-v3}) skills in simulation.}
    \label{fig:tasks}

\end{figure}

Our experimental evaluation aims to study the following empirical questions:
\textbf{(1)} Does supervised regression provide a good estimator of the true dynamical distance?
\textbf{(2)} Is DDL applicable to real-world, vision-based robotic control tasks?
\textbf{(3)} Does DDL provide an efficient method of learning skills a) from user-provided preferences, and b) completely unsupervised?

We evaluate our method both in the real world and in simulation on a set of state- and vision-based continuous control tasks. We consider a 9-DoF real-world dexterous manipulation task and 4 standard OpenAI Gym tasks (Hopper-v3, HalfCheetah-v3, Ant-v3, and InvertedDoublePendulum-v2). For all of the tasks, we parameterize our distance function as a neural network, and use soft actor-critic (SAC)~\cite{haarnoja2018soft} with the default hyperparameters to learn the policy. For state-based tasks, we use feed-forward neural networks and for the vision-based tasks we add a convolutional preprocessing network before these fully connected layers. The image observation for all the vision-based tasks are 3072 dimensional (32x32 RGB images). Further details are presented in Appendix~\ref{appendix:technical-details}.

We study question \textbf{(1)} using a simple didactic example involving navigation through a two-dimensional S-shaped maze, which we present in Appendix~\ref{app:didactic_example}. The other two research questions are studied in the following sections.

\vspacebeforesubsection
\subsection{Vision-Based Real-World Manipulation from Human Preferences}
\label{sec:claw_experiments}
\vspaceaftersubsection

\begin{wrapfigure}{r}{0.5\textwidth}
    \vspace{-4mm}
    \centering
    \includegraphics[width=0.5\columnwidth, trim={0 3mm 0 10mm}, clip]{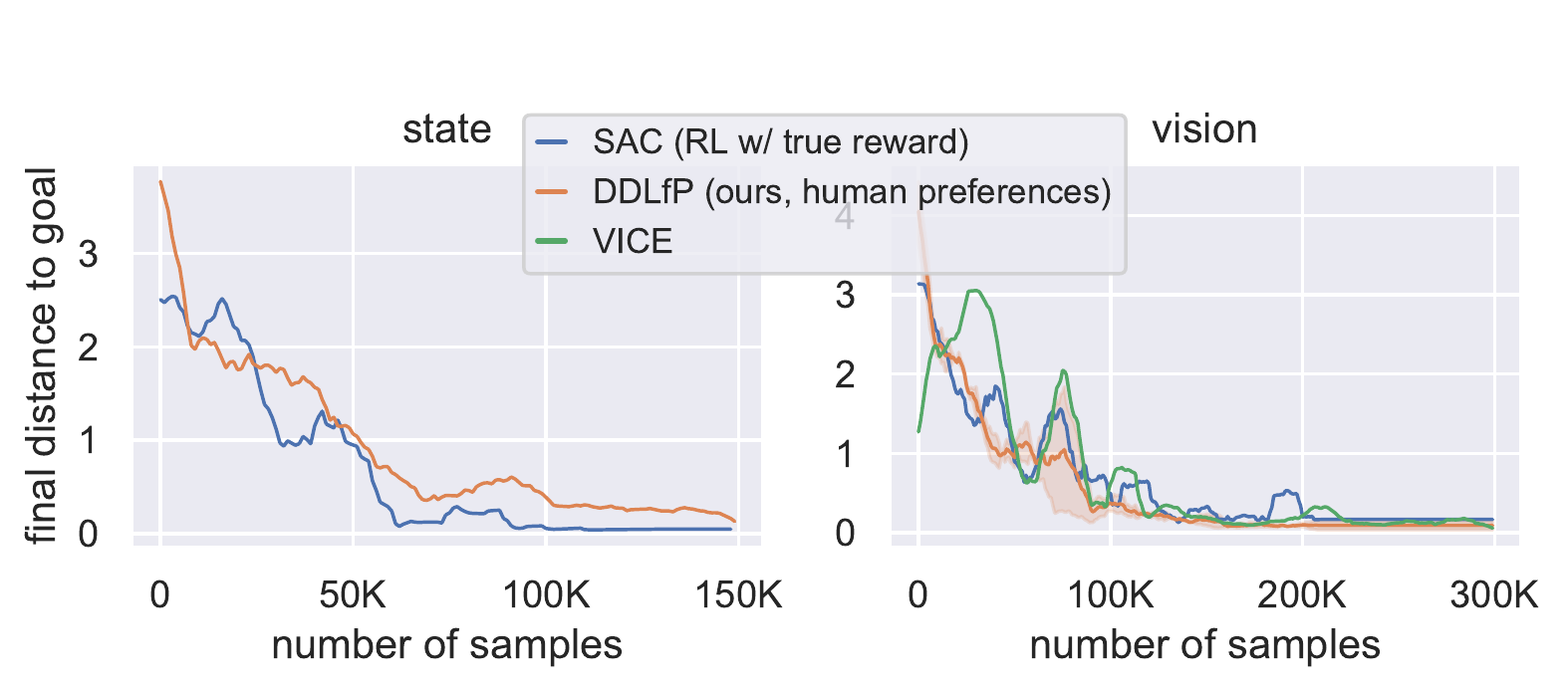}
    \caption{\captiontextsize (Left) learning curves for the valve rotation task learned from state. (Right) Same task from vision. The curves correspond to the final distance (measured in radians) of the valve from the target angle during a rollout. Our method (DDLfP, orange) solves the task in 8 hours. Its performance is comparable to that of SAC with true rewards, and VICE with example outcome images. DDLfP only requires 10 preference queries, and learns without true rewards or outcome images. We compare our method in the simulated version of this task in~\autoref{fig:ablations}.
    }
    \label{fig:real_world_experiments}
    \vspace{-5mm}
\end{wrapfigure}

To study the question \textbf{(2)}, we apply DDLfP to a real-world vision-based robotic manipulation task. The domain consists of a 9-DoF ``DClaw" hand introduced by~\citet{ahn2019robel}, and the manipulation task requires the hand to rotate a valve 180 degrees, as shown in \autoref{fig:claw_intro}. The human operator is queried for a preference every 10K environment steps. Both the vision- and state-based experiments with the real robot use 10 queries during the first 4 hours of an 8-hour training period. Note that, for this and all the subsequent experiments, DDLfP does not have access to the true reward, and must learn entirely from preference queries, which in this case are provided by a human operator.

\autoref{fig:real_world_experiments} presents the performance over the course of training. DDLfP uses 10 preference queries to learn the task and its performance is comparable to that of SAC trained with a ground truth shaped reward function. We also show a comparison to variational inverse control with events (VICE)~\cite{singh2019end}, a recent classifier-based reward specification framework. Instead of preference queries, VICE requires the user to provide examples of the desired goal state at the beginning of training (20 images in this case). For vision-based tasks, VICE involves directly showing images of the desired outcome to the user, which requires physically arranging a scene and taking a picture of it. Preferences, on the other hand, require a user to simply select one state out of a small set, which can be done with a button press and done e.g. remotely, thus often making it substantially less labor-intensive than VICE. As we can see in the experiments, DDLfP achieves similar performance with substantially less operator effort, using only a small number of preference queries. The series of goal preferences queried from the human operator are shown in Appendix \ref{appendix:claw-preference-queries}.

\begin{figure}[tb]
    \begin{center}
    \includegraphics[width=1.0\textwidth, trim={0 0 0 0}, clip]{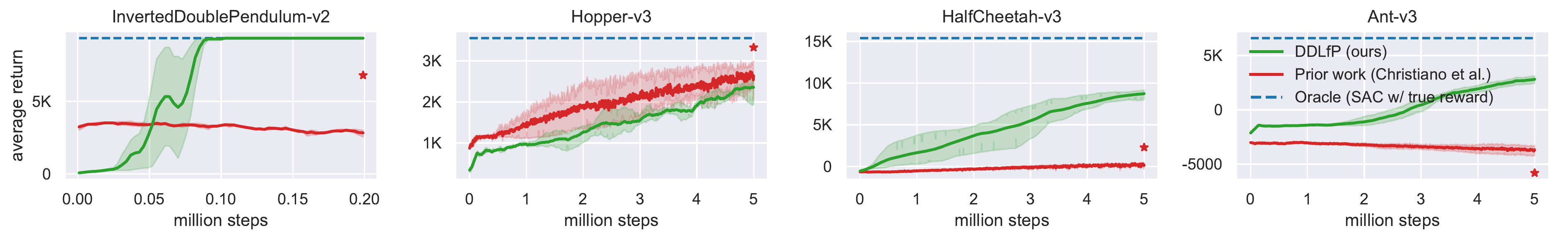}
    \end{center}


    \caption{\captiontextsize Learning curves for MuJoCo tasks with DDLfP. The y-axis presents the true return of the task. We compare DDLfP to SAC trained directly from the true reward function, which provides an oracle upper bound baseline, and the prior method proposed by \citet{christiano2017deep}. The prior method uses an on-policy RL algorithm which typically requires more samples than off-policy algorithms, and thus we also plot its final performance after 20M training steps with red star. At the time of the submission, the Ant-v3 run is still in progress and the complete learning curve will be included in the final.
    }
    \label{fig:learning_curves}
\end{figure}

\vspacebeforesubsection
\subsection{Ablations, Comparisons, and Analysis}
\label{sec:ablations}
\vspaceaftersubsection

Next, we analyze design decisions in our method and compare it to prior methods in simulation. First, we replace the cumulative objective in \autoref{eq:policy_loss} with objective that greedily minimizes the distance function trained with supervised loss. This objective is unable to learn the task from either state or vision observations. Next, we replace the supervised loss in \autoref{eq:distance_loss} of our DDL method with a temporal difference (TD) Q-learning style update rule that learns dynamical distances with approximate dynamic programming. The results in \autoref{fig:ablations} show that, all else being equal, the TD-based method fails to learn successfully from both low-dimensional state and vision observations. Figure~\ref{fig:ablations} further shows a comparison between using the dynamical distance as the reward in comparison to a reward of -1 for each step until the goal is reached, which corresponds to hindsight experience replay (HER) with goal sampling replaced with preference goals~\cite{andrychowicz2017hindsight}.
We see that dynamical distances allow the policy to reach the goal when learning both from state and from images, while HER is only successful when learning from low-dimensional states. 

\begin{wrapfigure}{r}{0.5\textwidth}
    \vspace{-5mm}
    \centering
    \includegraphics[width=0.5\columnwidth, trim={0 3mm 0 10mm}, clip]{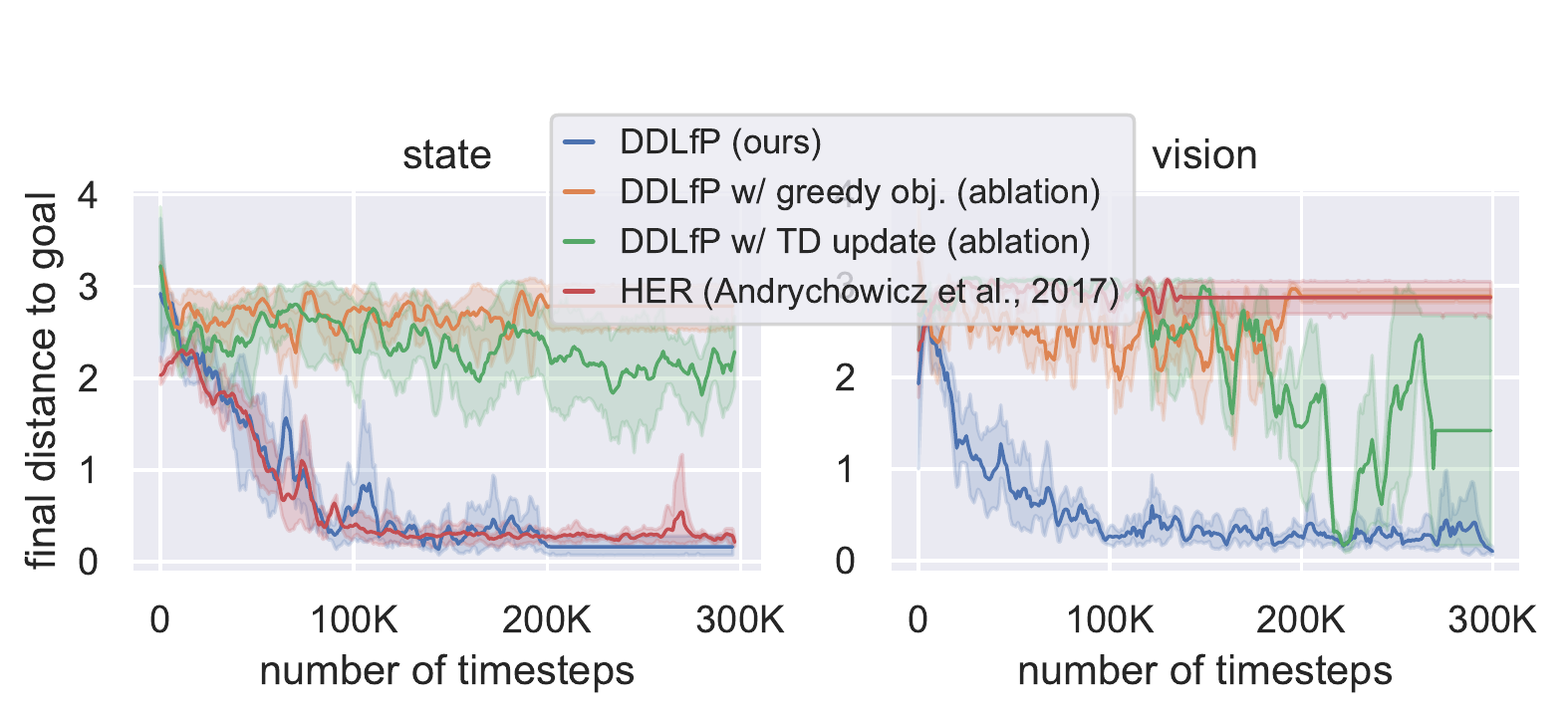}
    \caption{\captiontextsize We compare DDL against alternative methods for learning distances on the simulated valve turning task, when learning from the underlying low-dimensional state (left) and from images (right). Dynamical distances used greedily (orange) or learned with TD (green) generally perform poorly. HER (red) can learn from low-dimensional states, but fails to learn from images. Our method, DDLfP (blue) successfully learns the task from either states or images.}
    \vspace{-3mm}
    \label{fig:ablations}
\end{wrapfigure}

These results are corroborated by prior results in the literature that have found that temporal difference learning struggles to capture the true value accurately~\cite{lillicrap2015continuous,fujimoto2018addressing}. Note that prior work work does not use the full state as the goal, but rather manually selects a low-dimensional subspace, such as the location of an object, forcing the distance to focus on task-relevant objects~\cite{andrychowicz2017hindsight}. Our method learns distances between full image states (3072-dimensional) while HER uses 3-dimensional goals, a difference of two orders of magnitude in dimensionality. This difficulty of learning complex image-based goals is further corroborated in prior work~\cite{pong2018temporal,nair2018visual,pong2019skewfit,warde2018unsupervised}.


\autoref{fig:learning_curves} presents results for learning from preferences via DDLfP (in green) on a set of continuous control tasks to further study the question \textbf{(3,a)}. The plots show the true reward for each method on each task. DDLfP receives only sparse preferences as task-specific supervision, and the preferences in this case are provided synthetically, choosing the state that has progressed the largest distance from the initial state in the desired direction, i.e. the state with largest x-coordinate value. However, this still provides substantially less supervision signal than access to the true reward for all samples. We compare to~\cite{christiano2017deep}, which also uses preferences for learning skills, but without the use of dynamical distances. The prior method is provided with 750 preference queries over the course of training, while our method uses 100 for all locomotion tasks, and only a single query for the InvertedDoublePendulum-v2, as the initial state and the goal states coincides.\footnote{In our case, one preference query amounts to choosing one of five states, whereas in~\cite{christiano2017deep} a query consists always of two state sequences.} Note that \citet{christiano2017deep} utilizes an on-policy RL algorithms, which is less efficient than SAC. However, DDLfP outperforms this prior method in terms of \emph{both} final performance and learning speed on all tasks, except for the Hopper-v3 task.

Locomotion tasks like the ones considered here do not fit into DDL framework directly. In this particular case of locomotion tasks, we can fix the issue by considering a case where the “ultimate” task is to reach a specific goal, i.e. the operator would always choose the goal to be the state closest to the "ultimate task goal". In that case, we can see the locomotion task to be the limit case where the “ultimate goal” is as far as possibly reachable within the maximum episode length.

\vspacebeforesubsection
\subsection{Acquiring Unsupervised Skills}
\vspaceaftersubsection

\begin{figure}
    \centering
    \includegraphics[width=\textwidth]{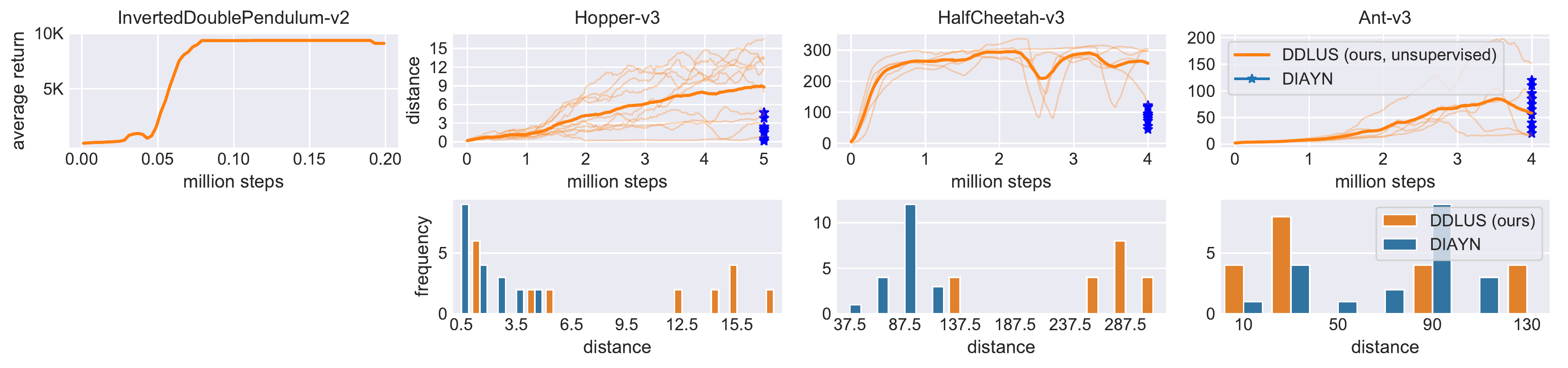}
    \caption{\captiontextsize (Top) Learning curves for DDLUS. The y-axis plots the environment return (not accessible during the training) for InvertedDoublePendulum-v3, and the L2-distance travelled from the origin for Hopper-v3, HalfCheetah-v3, and Ant-v3. (Bottom) Frequency histograms of skills learned with DDLUS (blue) and DIAYN (orange)~\cite{eysenbach2018diversity} across different training runs, evaluated according to the travelled L2-distance from the origin.}
    \label{fig:unsupervised}
\end{figure}

Finally, we study question \textbf{(3,b)} in order to understand how well DDLUS can acquire skills without any supervision. We structure these experiments analogously to the unsupervised skill learning experiments proposed by \citet{eysenbach2018diversity}, and compare to the DIAYN algorithm, another unsupervised skill discovery method, proposed in their prior work. While our method maximizes the complexity of the learned skills by attempting to reach the furthest possible goal, DIAYN maximizes the diversity of learned skills. This of course produces different biases in the skills produced by the two methods. \autoref{fig:unsupervised} shows both learning curves and histograms of the skills learned in the locomotion tasks with the two methods, evaluated according to how far the simulated robot in each domain travels from the initial state. Our DDLUS method learns skills that travel further than DIAYN, while still providing a variety of different behaviors (e.g., travel in different directions). This experiment aims to provide a direct comparison to the DIAYN algorithm~\cite{eysenbach2018diversity}, though a reasonable criticism is that maximizing dynamical distance is particularly well-suited for the criteria proposed by \citet{eysenbach2018diversity}. We also evaluated DDLUS on the InvertedDoublePendulum-v2 domain, where the task is to balance a pole on a cart. As can be seen from~\autoref{fig:unsupervised}, DDLUS can efficiently solve the task without the true reward, as reaching dynamically far states amounts to avoiding failure as far as possible.

\vspacebeforesection
\section{Conclusion}
\label{sec:conclusions}
\vspaceaftersection

We presented dynamical distance learning (DDL), an algorithm for learning dynamical distances that can be used to specify reward functions for goal reaching policies, and support both unsupervised and semi-supervised exploration and skill discovery. Our algorithm uses a simple and stable supervised learning procedure to learn dynamical distances, which are then used to provide a reward function for a standard reinforcement learning method. This makes DDL straightforward to apply even with complex and high-dimensional observations, such as images. By removing the need for manual reward function design and manual reward shaping, our method makes it substantially more practical to employ deep reinforcement learning to acquire skills even with real-world robotic systems. We demonstrate this by learning a valve-turning task with a real-world robotic hand, using 10 preference queries from a human, without any manual reward design or other examples or supervision. One of the main limitations of our current approach is that, although it can be used with an off-policy reinforcement learning algorithm, it requires on-policy data collection for learning the dynamical distances. While the resulting method is still efficient enough to learn directly in the real world, the efficiency of our approach can likely be improved in future work by lifting this limitation. This would not only make learning faster but would also make it possible to pre-train dynamical distances using previously collected experience, potentially making it feasible to scale our method to a multi-task learning setting, where the same dynamical distance function can be used to learn multiple distinct skills.



\acknowledgments{
We thank Vikash Kumar for the DClaw robot design, Nicolas Heess for helpful discussion, and Henry Zhu and Justin Yu for their help on setting up and running the hardware experiments. This research was supported by the Office of Naval Research, the National Science Foundation through IIS-1651843 and IIS-1700696, and Berkeley DeepDrive.
}


\bibliography{references.bib}
\bibliographystyle{iclr2020_conference}

\clearpage
\begin{appendices}
\section{Correct Behavior in the Pathological MDP}
\label{appendix:pathological-mdp}
In this appendix we show that the policy that maximizes the objective in \autoref{eq:rl_objective}, with the reward  $r_\goal(\state,\action)=-d^\pi(\state, \goal)$, where $d^\pi$ is given by \autoref{eq:distance_definition}, prefers safe actions over risky actions.

Assume that $c(\st,\stp)=\indg{\st}$ is an indicator function that is 0 if $\st$ is a goal state or terminal state and 1 for all the other states. We can now write the definition of $d^\pi$ as an infinite sum and substitute $r_\goal(\state, \action) \leftarrow -d^\pi(\state,\goal)$ in \autoref{eq:rl_objective}:
\begin{align}
    \mathcal{L}(\pi) &=-\E{\tau \sim \pi}{\sum_{t=0}^\infty \gamma^t
    \E{\tau'\sim \pi}{\sum_{k=0}^\infty \gamma^{k}\indg{\state_k} \biggr\rvert\state_0'=\st, \action_0'=\at}}.
\end{align}
The first term ($k=0$) in the inner sum depends only on $\state_0'$, which is given, and the term can thus be moved outside the inner expectation:
\begin{align}
    \mathcal{L}(\pi) &=-\E{\tau \sim \pi}{\sum_{t=0}^\infty\gamma^t\indg{\st} + \sum_{t=0}^\infty \gamma^t
    \E{\tau'\sim \pi}{\sum_{k=1}^\infty \gamma^{k} \indg{\state_k'}\biggr\rvert\state_0'=\st, \action_0'=\at}}.
\end{align}
Next, note that the statistics of the inner expectation over $(\state_1',\action_1')$ are the same as the outer expectation over $(\state_1,\action_1)$, as they are both conditioned on the same $(\state_t,\action_t)$. Thus, we can condition the second expectation directly on $(\state_1',\action_1')=(\state_{t+1}, \action_{t+1})$:
\begin{align}
    \mathcal{L}(\pi) &=-\E{\tau \sim \pi}{\sum_{t=0}^\infty\gamma^t\indg{\st} + \sum_{t=0}^\infty \gamma^t
    \E{\tau'\sim \pi}{\sum_{k=1}^\infty \gamma^{k} \indg{\state_k'}\biggr\rvert\state_1'=\state_{t+1}, \action_1'=\action_{t+1}}}.
\end{align}
We can now apply the same argument as before and move $\indg{\state_1'}$ outside the inner expectation. Repeating these steps multiple times yields
\begin{align}
    \mathcal{L}(\pi) &=-\E{\tau \sim \pi}{\sum_{t=0}^\infty\gamma^t \indg{\st}+ \sum_{t=0}^\infty\gamma^{t+1}\indg{\stp}+\sum_{t=0}^\infty\gamma^{t+2}\indg{\state_{t+2}}+...}\notag\\
     &=-\E{\tau \sim \pi}{\sum_{t=0}^\infty \gamma^t(t+1)\indg{\st}}.\label{eq:cumulative_distance}
\end{align}
Assuming that the agent always reaches the goal relatively quickly compared to the discount factor, such that $\gamma^t\approx 1$, the trajectories that take longer dominate the loss due to the $(t+1)$ factor. Therefore, an optimal agent prefers actions that reduce the risk of long, highly suboptimal trajectories, avoiding the pathological behavior discussed in Section~\ref{sec:policy_improvement}.

\section{Policy Improvement when Using Distance as Reward}
\label{appendix:policy-improvement-proof}

In this appendix we show that, when we use the negative dynamical distance $-d^\pi$ as the reward function in RL, we can learn an optimal policy with respect to the true dynamical distance, leading to policies that optimize the actual number of time steps needed to reach the goal. This result is non-trivial, since the reward function does not at first glance directly optimize for shortest paths. Our proof relies on the assumption that the MDP has deterministic dynamics. However, this assumption holds in all of our experiments, since the MuJoCo benchmark tasks are governed by deterministic dynamics. Under this assumption, DDL will learn policies that take the shortest path to the goal at convergence, despite using the negative dynamical distance as the reward.

Let $d^*(\state, \goal) = \min_{\pi}d^\pi(\state, \goal)$ be the optimal distance from state $\state$ to goal state $\goal$. Let $\pi'$ be the optimal policy for the reinforcement learning problem with reward $r_\goal(\state,\action)=-d^\pi(\state, \goal)$. DDL can be viewed as alternating between fitting $d^\pi$ to the current policy $\pi$, and learning a new policy $\pi'$ that is optimal with respect to the reward function given by $-d^\pi$.\footnote{Of course, the actual DDL algorithm interleaves policy updates and distance updates. In this appendix, we analyze the ``policy iteration'' variant that optimizes the policy to convergence, but the result can likely be extended to interleaved updates in the same way that policy iteration can be extended into an interleaved actor-critic method.} We can now state our main theorem as follows:

\begin{theorem}
Under deterministic dynamics, for any state $\state$ and $\goal$, we have:
\begin{enumerate}
    \item $d^{\pi'}(\state, \goal) \le d^{\pi}(\state, \goal)$.
    \item If $d^{\pi'}(\state, \goal) = d^{\pi}(\state, \goal)$, then $d^{\pi'}(\state, \goal) = d^*(\state, \goal)$.
\end{enumerate}
This implies that, when the policy converges, such that $\pi' = \pi$, the policy $\pi'$ achieves the optimal distance to any goal, and therefore is the optimal policy for the shortest path reward function (e.g., the reward function that assigns a reward of $-1$ for any step that does not reach the goal).
\end{theorem}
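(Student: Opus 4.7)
I specialize to deterministic dynamics and $\gamma = 1$, so that $d^\pi(\state, \goal)$ equals the deterministic hitting time $T^\pi(\state)$, where I implicitly assume $\pi$ reaches $\goal$ from every state where $d^\pi$ is evaluated. Along $\pi$'s own trajectory, $T^\pi$ decreases by exactly one per step, so a direct computation (equivalently, the identity derived in Appendix A specialized to a deterministic trajectory) gives the self-value $V^\pi_\pi(\state) = -T^\pi(\state)(T^\pi(\state)+1)/2$. The whole argument is driven by the single inequality $V^{\pi'}_\pi(\state) \ge V^\pi_\pi(\state)$, valid pointwise since $\pi'$ is optimal for the MDP with reward $-d^\pi$.

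\textbf{Part 1.} Let $T := T^\pi(\state)$, $T' := T^{\pi'}(\state)$, and $\tau_t := T^\pi(\state_t^{\pi'})$ along $\pi'$'s deterministic trajectory. Bellman optimality of $\pi'$ at each $\state_t^{\pi'}$, compared with the one-step deviation of taking $\pi$'s action and then following $\pi$ (which yields value $-\tau_t(\tau_t+1)/2$), produces the recursion $\sum_{t' > t} \tau_{t'} \le \tau_t(\tau_t - 1)/2$. I argue by backward induction on $t$ that $\tau_t \ge T' - t$: the base $\tau_{T'-1} \ge 1$ is immediate, and the step combines the recursion with the induction-based lower bound $\sum_{t' > t} \tau_{t'} \ge (T'-t)(T'-t-1)/2$ together with monotonicity of $x(x-1)$ on $x \ge 1$. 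Specializing to $t = 0$ yields $T \ge T'$, which is Part 1.

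\textbf{Part 2.} Assume now $T' = T$. Combining Part 1's bound $\tau_t \ge T - t$ with the optimality bound $\sum \tau_t \le T(T+1)/2 = \sum (T-t)$ forces $\tau_t = T - t$ exactly along $\pi'$'s trajectory, hence $V^{\pi'}_\pi(\state) = V^\pi_\pi(\state)$, so $\pi$ is also optimal at $\state$ for the reward $-d^\pi$. One-step Bellman optimality of $\pi$ at $\state$ then gives $T^\pi(v) \ge T - 1$ for every successor $v$ of $\state$. I conclude Part 2 by iterating this one-step inequality along an optimal path $u_0 = \state, u_1, \ldots, u_{T^*} = \goal$, obtaining $T^\pi(u_i) \ge T - i$; setting $i = T^*$ gives $0 = T^\pi(\goal) \ge T - T^*$, i.e., $T \le T^*$, and combined with the trivial $T^* \le T$ this forces $T = T^* = T^{\pi'}(\state)$.

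\textbf{Main obstacle.} Part 1's backward induction is clean and essentially forced by the Bellman deviation bound. The subtle step will be the iteration in Part 2: one-step optimality of $\pi$ has been derived at $\state$, but must be invoked at each $u_i$ on an optimal path that need not coincide with $\pi'$'s trajectory. Formally this requires transferring the tight value-equality $V^{\pi'}_\pi = V^\pi_\pi$ from $\state$ to each $u_i$, which is automatic at a true fixed point of the DDL iteration (where $\pi = \pi'$ everywhere) but, in a strict pointwise reading, demands either a tie-breaking convention favouring shorter paths at Bellman backups or a subsidiary induction that controls $T^\pi(u_i)$ from above using the optimality of $\pi'$ against the candidate policy ``follow the optimal path, then continue under $\pi$.''
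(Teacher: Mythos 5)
Your Part 1 is correct, and it takes a genuinely different route from the paper. The paper fixes the goal, partitions states into level sets $\mathcal{S}_k=\{\state: d^\pi(\state,\goal)=k\}$, and derives a contradiction at a minimal violating level $K$; you instead fix a single start state and run a backward induction along $\pi'$'s own trajectory, using only the pointwise dominance $V^{\pi'}_\pi \ge V^\pi_\pi$ (valid because $\pi'$ is optimal for the reward $-d^\pi$) together with the closed form $V^\pi_\pi(\state) = -T^\pi(\state)(T^\pi(\state)+1)/2$, which yields the quantitative bound $\tau_t \ge T'-t$ and hence $T\ge T'$ at $t=0$. This is local and needs no global minimality selection; the price is the specialization $\gamma=1$, which is consistent with the paper's setting (its own manipulations are in the same undiscounted spirit), and the implicit assumption that $\pi$ and $\pi'$ reach $\goal$ (which the paper also makes tacitly).

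Part 2 has a genuine gap, and it is exactly the one you flag at the end --- but it cannot be closed in the ``strict pointwise reading,'' because that reading is false. The hypothesis $d^{\pi'}(\state,\goal)=d^\pi(\state,\goal)$ at a single state does not force $d^\pi(\state,\goal)=d^*(\state,\goal)$. Counterexample (deterministic, $\goal$ terminal): from $A$ one action starts the path $A\to X_1\to X_2\to\goal$ and another goes to $Z$; from $Z$ one action goes directly to $\goal$ and another enters a long corridor $W_1\to\cdots\to W_{100}\to\goal$. Let $\pi$ choose $A\to X_1$ and $Z\to W_1$, so $d^\pi(A,\goal)=3$, $d^\pi(Z,\goal)=101$, while $d^*(A,\goal)=2$. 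The policy $\pi'$ optimal for the cumulative reward $-d^\pi$ fixes $Z$ (goes straight to $\goal$) but still strictly prefers $A\to X_1$, since the continuation cost $2\gamma+\gamma^2$ beats $101\gamma$ for any $\gamma\in(0,1]$; hence $d^{\pi'}(A,\goal)=3=d^\pi(A,\goal)>d^*(A,\goal)$. The preference is strict, so no tie-breaking convention can rescue the pointwise claim, and no subsidiary induction can either, since the claim itself fails. What is true --- and what the convergence sentence of the theorem actually needs --- is the fixed-point version: if $d^{\pi'}(\state,\goal)=d^\pi(\state,\goal)$ for \emph{all} $\state$, then $d^\pi=d^*$ everywhere. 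That is what the paper's proof establishes, via a different mechanism than yours: it assumes $d^\pi>d^*$ at some state, takes a minimal violating state in $\mathcal{S}^*_K$, modifies $\pi$ there for one step to agree with $\pi^*$, and shows this strictly improves the cumulative $-d^\pi$ objective, so $\pi$ cannot be value-equal to $\pi'$ everywhere (a contrapositive). Under the global hypothesis your own machinery also closes immediately: the equality $V^{\pi'}_\pi=V^\pi_\pi$ then holds at every state, so your one-step inequality $T^\pi(v)\ge T^\pi(\state)-1$ for every successor $v$ is available along the entire optimal path $u_0,\ldots,u_{T^*}$, and the iteration gives $T\le T^*$, hence $T=T^*$. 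So restate Part 2 with the ``for all states'' hypothesis (or prove the contrapositive, as the paper does) and your argument is complete; as literally posed, the pointwise statement you set out to prove is unprovable.
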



\begin{proof}
\item 
\paragraph{Part 1}
Without loss of generality, we assume that our policy is deterministic, since the set of optimal policies in an MDP always includes at least one deterministic policy. We also assume that $\goal$ is a terminal state and thus $d(\goal, \goal) = 0$. Let us denote the action of policy $\pi$ on state $\state$ as $\pi(\state)$. We start by showing that $d^{\pi'}(\state, \goal) \le d^{\pi}(\state, \goal)$. We fix a particular goal $\goal$. Let $\mathcal{S}_k = \{\state; d^{\pi}(\state, \goal) = k\}$ be the set of states that takes $k$ steps under $\pi$ to reach the goal. We show that $d^{\pi'}(\state, \goal) \le d^{\pi}(\state, \goal) = k$ for all $\state \in \mathcal{S}_k$ for each $k$ by contradiction.

For $k = 0$, $\mathcal{S}_0 = \{\goal\}$ is just the single goal state and $d^{\pi'}(\goal, \goal) = d^{\pi}(\goal, \goal) = 0$ by definition. For $k = 1$, for all $\state \in \mathcal{S}_1$, there is an action $\action$ that reaches the goal state as the direct next state. Therefore, the optimized policy $\pi'$ would still take the same action $\action$ on these states and $d^{\pi'}(\state, \goal) = 1$.

Now assume that the opposite is true, that $d^{\pi'}(\state, \goal) > d^{\pi}(\state, \goal)$ for some states. Then, there must be a smallest number $K > 1$ and a state $\state_0 \in \mathcal{S}_K$ such that $d^{\pi'}(\state_0, \goal) = T > d^{\pi}(\state_0, \goal) = K$. Now let us denote the trajectory of states taken by $\pi$ starting from $\state_0$ as $\{\state_0, \state_1, ..., \state_K = g \}$, and the trajectory taken by $\pi'$ as $\{\state_0'=\state_0, \state_1', ..., \state_{T}' = g \}$. Let $\mathcal{L}_{\pi}(\cdot)$ denote the accumulated discounted sum of distance as defined in Equation \ref{eq:policy_loss}. By our assumption $T > K$, and since $\pi'$ is optimal with respect to the reward $r_\goal(\state,\action)=-d^\pi(\state, \goal)$, we have
\begin{align}
    \mathcal{L}_{\pi}(\pi') & = \sum_{i=0}^{T - 1}\gamma^i d^\pi(\state_i', \goal) \le \mathcal{L}_{\pi}(\pi) = \sum_{i=0}^{K - 1}\gamma^i d^\pi(\state_i, \goal) = \sum_{i=0}^{K - 1}\gamma^i (K - 1 -i)
\end{align}

Then there must be a time $\hat{t} < K$ such that $d^\pi(\state_{\hat{t}}', \goal) < d^\pi(\state_{\hat{t}}, \goal) = K -1 - \hat{t}$. Therefore $\state_{\hat{t}}' \in \mathcal{S}_k$ for some $k < K -1 - \hat{t}$.
However, starting from $\state_{\hat{t}}'$, we have $d^{\pi'}(\state_{\hat{t}}', \goal) = T - 1 - \hat{t} > K - 1 - \hat{t} = d^\pi(\state_{\hat{t}}', \goal)$.
Therefore, we reached a contradiction with our assumption that $d^{\pi'}(\state, \goal) \le d^{\pi}(\state, \goal)$ for all $\state$, $k < K$ such that $\state \in \mathcal{S}_k$. Therefore, $d^{\pi'}(\state, \goal) \le d^{\pi}(\state, \goal)$ holds for all states.

\paragraph{Part 2}
Now we show the second part: if $d^{\pi}(\state, \goal) = d^{\pi'}(\state, \goal)$, then $d^{\pi}(\state, \goal) = d^*(\state, \goal)$. We prove this with a similar argument, grouping states by distance. Let $\mathcal{S}^*_k = \{\state; d^*(\state, \goal) = k\}$ be the set of states that takes $k$ steps under the optimal policy to reach the goal. Note that, for any arbitrary policy $\pi$, we have $d^{\pi}(\state, \goal) \ge d^*(\state, \goal)$ by definition, since $d^*$ is the optimal distance.

Suppose that $d^{\pi}(\state, \goal) > d^*(\state, \goal)$ for some state $\state$. Then there must be a smallest integer $K \ge 0$ such that there exists a state $\state_0 \in \mathcal{S}^*_K$ where $d^{\pi}(\state_0, \goal) > d^*(\state_0, \goal)$. For all $k < K$, we have $d^{\pi}(\state, \goal) = d^*(\state, \goal)$ for all $\state \in \mathcal{S}^*_k$. Now starting from that state $\state_0$, let the trajectory of states taken by $\pi$ be $\{\state_0, \state_1, ..., \state_T = g \}$. Note that since $d^{\pi}(\state_0, \goal) > d^*(\state_0, \goal) = K$, $T > K$.  Let $\hat{\pi}$ be the policy such that it agrees with $\pi^*$ on $\state_0$ and agrees with $\pi$ everywhere else. At the first step, $\hat{\pi}$ lands on state $\state'_1$. Since $\state_0$ is $K$ steps away from $\goal$ under $d^*$, $\state'_1$ must be $K - 1$ steps away under $d^*$ and $\state'_1 \in \mathcal{S}^*_{K-1}$. Therefore, since $\pi$ and $\pi^*$ agrees on all states that are less than $K$ steps away from goal $\goal$, $\hat{\pi}$ would take the same action as $\pi^*$ and hence take another $K - 1$ steps to goal $\goal$. Now let us denote the trajectory taken by $\hat{\pi}$ as $\{\state_0'=\state_0, \state_1', ..., \state_{K}' = g \}$. We compare the discounted sum of rewards of $\pi$ and $\hat{\pi}$ under the reward function $r_\goal(\state,\action)=-d^\pi(\state, \goal)$.

\begin{align}
    \mathcal{L}_{\pi}(\pi) & = \sum_{i=0}^{T - 1}\gamma^i d^\pi(\state_i, \goal) = d^\pi(\state_0, \goal) + \sum_{i=1}^{T - 1}\gamma^i d^\pi(\state_i, \goal) \nonumber \\
    & = d^\pi(\state_0, \goal) + \sum_{i=1}^{T - 1}\gamma^i (T - i) \ge d^\pi(\state_0, \goal) + \sum_{i=1}^{K - 1}\gamma^i (K - i) \nonumber \\
    & = d^\pi(\state_0, \goal) + \sum_{i=1}^{K - 1}\gamma^i d^\pi(\state'_i, \goal) = \sum_{i=0}^{K - 1}\gamma^i d^\pi(\state'_i, \goal)  = \mathcal{L}_{\pi}(\hat{\pi})
\end{align}

Therefore, we can see that $\hat{\pi}$ is a better policy than $\pi$. Then the optimal policy $\pi'$ under this reward must be different from $\pi$ on at least one state. Hence $d^{\pi}(\state, \goal) \ne d^{\pi'}(\state, \goal)$.

We've now reached the conclusion that if $d^{\pi'}(\state, \goal) \ne d^*(\state, \goal)$, then $d^{\pi'}(\state, \goal) \ne d^{\pi}(\state, \goal)$. Hence, by contraposition, if $d^{\pi'}(\state, \goal) = d^{\pi}(\state, \goal)$, then it must be that $d^{\pi'}(\state, \goal) = d^*(\state, \goal)$. Our proof is thus complete.

\end{proof}

\section{Didactic Example}
\label{app:didactic_example}
Our didactic example involves a simple 2D point robot navigating an S-shaped maze. The state space is two-dimensional, and the action is a two-dimensional velocity vector. This experiment is visualized in \autoref{fig:point}. The black rectangles correspond to walls, and the goal is depicted with a blue star. The learned distance from all points in the maze to the goal is illustrated with a heat map, in which lighter colors correspond to closer states and darker colors to distant states. During the training, the initial state is chosen uniformly at random, and the policy is trained to reach the goal state. From the visualization, it is apparent that DDL learns an accurate estimate of the true dynamical distances in this domain. Note that, in contrast to na\"{i}ve metrics, such as Euclidean distance, the dynamical distances conform to the walls and provide an accurate estimate of reachability, making them ideally suited for reward shaping.

\begin{figure}[ht!]
    \centering
    \begin{subfigure}[ht]{0.48\textwidth}
        \includegraphics[width=\textwidth, trim={40mm 60mm 40mm 50mm}, clip]{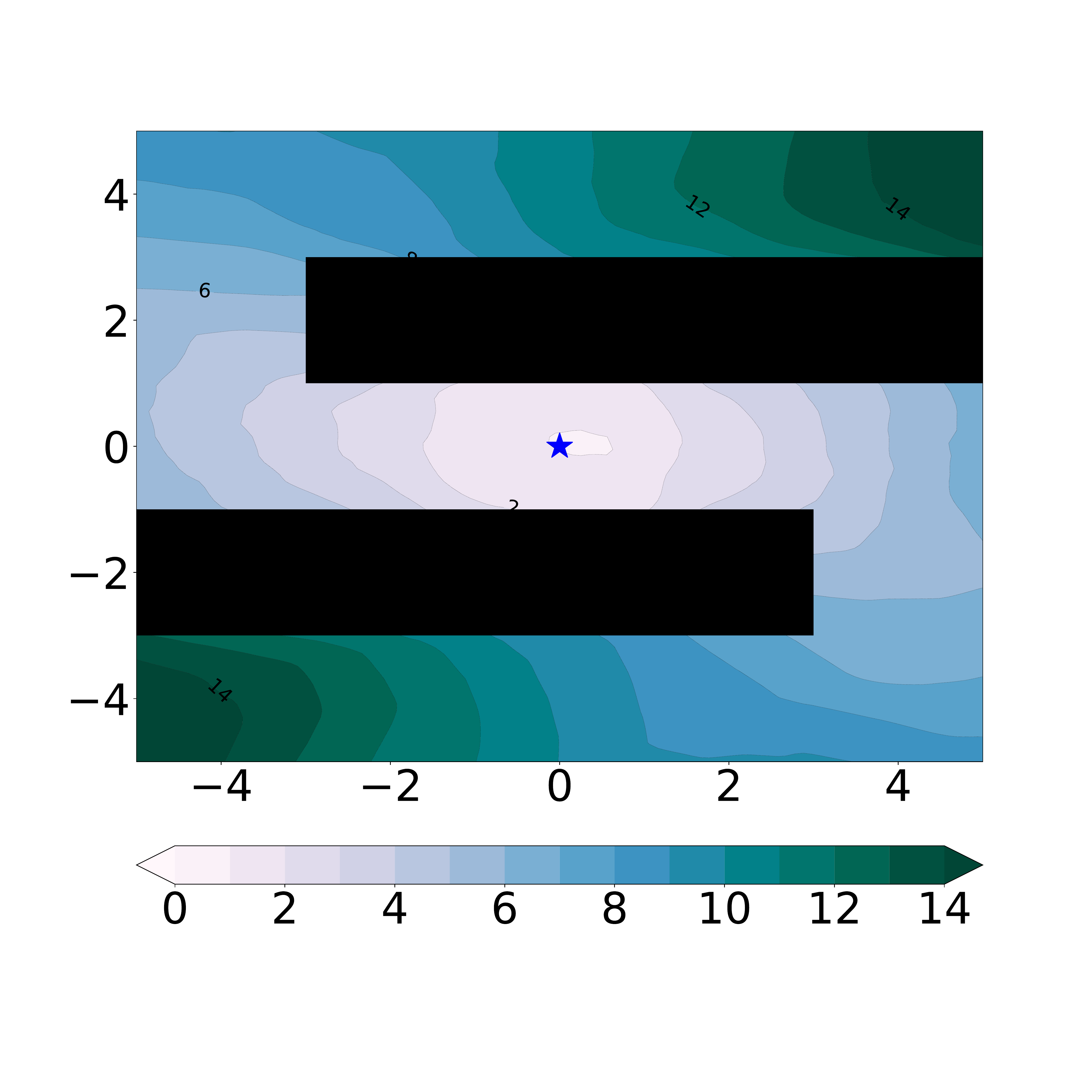}
        \caption{Our method}
        \label{fig:point:ours}
    \end{subfigure}
    \begin{subfigure}[ht]{0.48\textwidth}
        \includegraphics[width=\textwidth, trim={40mm 60mm 40mm 50mm}, clip]{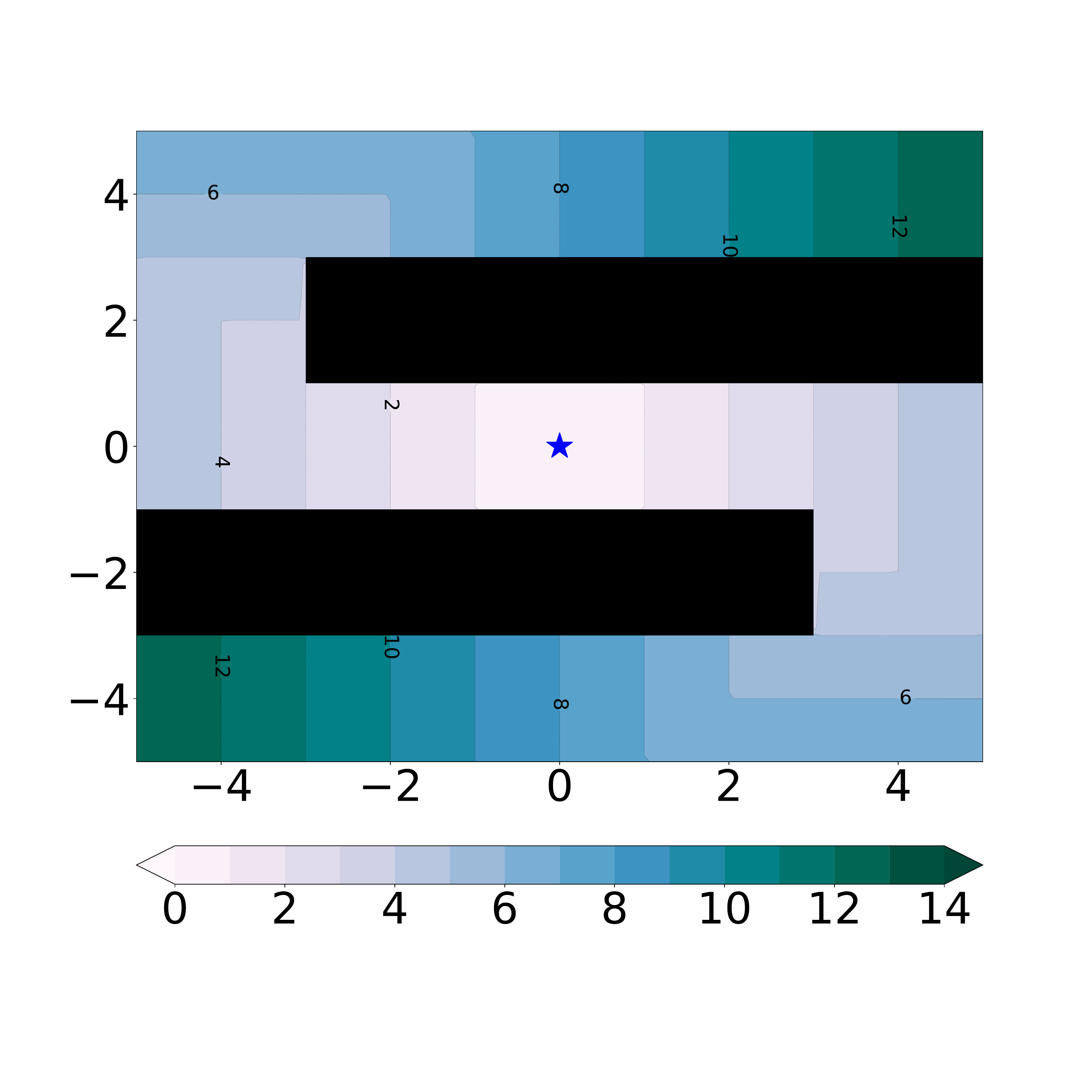}
        \caption{Ground truth}
        \label{fig:point:her}
    \end{subfigure}


    \caption{Evaluation of the learned distance in a 2D point environment. The state is the xy-coordinates of the point, and action corresponds to 2D velocities. The black bars denote walls, blue star is a goal state, and the heat map denotes the estimated distance to the goal. (a) Our method learns an accurate estimate of the shape of the distance function. (b) Ground-truth distance.}
	\label{fig:point}
\end{figure}


\section{Preference Queries for Real-World DClaw Experiment}
\label{appendix:claw-preference-queries}
\vspace{-5mm}
\begin{figure}[H]
    \centering
    \noindent\makebox[\textwidth]{\includegraphics[width=0.95\paperwidth, trim={0mm 0mm 0mm 0mm}, clip]{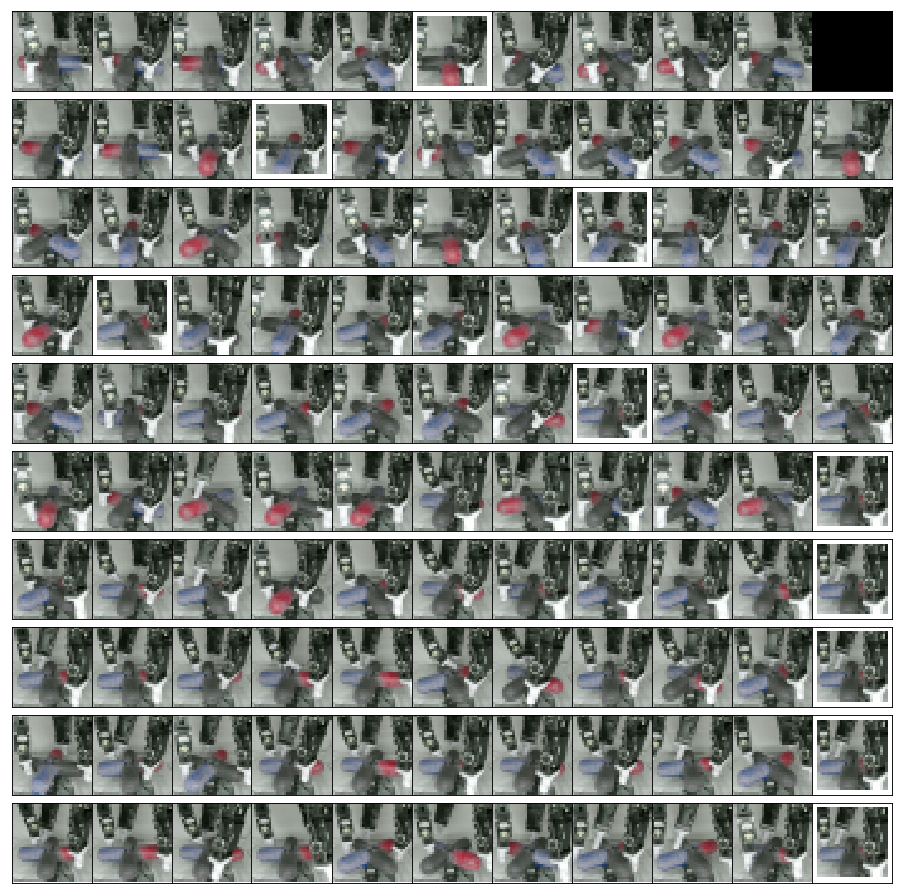}}
    \vspace{-5mm}
    \caption{Human preference queries for the vision-based DClaw experiment presented in \autoref{sec:claw_experiments}. Each image row presents the set of images shown to the human operator on a single query round. On each row, the first 10 images correspond to the last states of the most recent rollouts and the right-most image corresponds to the last goal. For each query, the human operator picks a new goal by inputting its index (between 0-10) into a text-based interface. The goals selected by human are highlighted with white borders.}
	\label{fig:claw:preference-queries}
\end{figure}

\section{Technical Details}
\label{appendix:technical-details}
All our experiments use Soft Actor-Critic as the policy optimizer, trained the default parameters by provided by the authors in~\cite{haarnoja2018bsoft}.

For all of the tasks, we parameterize our distance function as a neural network. For state-based tasks, we use feed-forward neural networks with two 256-unit hidden layers. For the vision-based tasks we add a convolutional preprocessing network before these fully-connected layers, consisting of four convolutional layers, each with 64 3x3 filters. Both cases use Adam optimizer with learning rate 3e-4 and TensorFlow`s default momentum parameters. The image observation for all the vision-based tasks are 3072 dimensional (32x32 RGB images).

Most important hyperparameters that we swept over in the final experiments, namely the size of the on-policy pool for training the distance function and the number of gradient steps per environment samples, are presented in~\autoref{table:distance-estimator-hyperparameters} below:

\begin{table}[h!]
\centering
\begin{tabular}{ c | c | c }
    Environment & gradient steps per environment steps & on-policy pool size \\
    \hline
    InvertedDoublePendulum-v2 & $1/64$ & $100k$ \\
    Hopper-v3 & $1/64$ & $16k$ \\
    HalfCheetah-v3 & $1/16$ & $16k$ \\
    Ant-v3 & $1/64$ & $10k$ \\
    DClaw (both state and vision) & $1/16$ & $100k$ \\
\end{tabular}
\caption{Distance estimator hyperparameters.}
\label{table:distance-estimator-hyperparameters}
\end{table}

For the DDLUS goal proposals, we consider all the samples in the distance on-policy pool as the  goal candidates. For DDLfP, we present the operator the last states ($s_{T-1}$) of the last $N$ episodes, where $N = 5$ for all the simulated experiments, and $N = 10$ for the hardware DClaw.

As discussed in~\autoref{sec:target_proposals}, for both DDLUS and DDLfP, the agent needs to explore in the vicinity of the goal state. In practice, we implement this by switching to a random uniform policy after 0.9T timesteps of each episode, where T is the maximum episode length (1000 for all the mujoco tasks and 200 for the DClaw task).

\end{appendices}


\end{document}